\newcommand{\commentout}[1]{}
\newcommand{\junk}[1]{}
\Crefname{corollary}{Corollary}{Corollaries}
\Crefname{proposition}{Proposition}{Propositions}
\Crefname{theorem}{Theorem}{Theorems}
\Crefname{definition}{Definition}{Definitions}
\Crefname{assumption}{Assumption}{Assumptions}
\Crefname{example}{Example}{Examples}
\Crefname{remark}{Remark}{Remarks}
\Crefname{setting}{Setting}{Settings}
\Crefname{lemma}{Lemma}{Lemmas}
\newcommand{\cN}{\mathcal{N}}
\newcommand{\cP}{\mathcal{P}}
\newcommand{\realset}{\mathbb{R}}
\newcommand{\E}[1]{\mathbb{E} \left[#1\right]}
\newcommand{\condE}[2]{\mathbb{E} \left[#1 \,\middle|\, #2\right]}
\newcommand{\prob}[1]{\mathbb{P} \left(#1\right)}
\newcommand{\condprob}[2]{\mathbb{P} \left(#1 \,\middle|\, #2\right)}
\newcommand{\abs}[1]{\left|#1\right|}
\newcommand{\ceils}[1]{\left\lceil#1\right\rceil}
\newcommand*\dif{\mathop{}\!\mathrm{d}}
\newcommand{\floors}[1]{\left\lfloor#1\right\rfloor}
\newcommand{\I}[1]{\mathds{1} \! \left\{#1\right\}}
\newcommand{\norm}[1]{\|#1\|}
\newcommand{\set}[1]{\left\{#1\right\}}
\newcommand{\T}{^\top}
\DeclareMathOperator*{\argmax}{arg\,max\,}
\mathchardef\mhyphen="2D
\newcommand{\expthree}{\ensuremath{\tt Exp3}\xspace}
\newcommand{\gradband}{\ensuremath{\tt GradBand}\xspace}
\newcommand{\softelim}{\ensuremath{\tt SoftElim}\xspace}
\newcommand{\ts}{\ensuremath{\tt TS}\xspace}
\newcommand{\ucb}{\ensuremath{\tt UCB1}\xspace}
\newcommand{\ucbv}{\ensuremath{\tt UCB\mhyphen V}\xspace}
\definecolor{codegreen}{rgb}{0,0.6,0}
\definecolor{codegray}{rgb}{0.5,0.5,0.5}
\definecolor{codepurple}{rgb}{0.58,0,0.82}
\definecolor{backcolour}{rgb}{0.95,0.95,0.92}
\lstdefinestyle{mystyle}{
  backgroundcolor=\color{backcolour},   commentstyle=\color{codegreen},
  keywordstyle=\color{magenta},
  numberstyle=\tiny\color{codegray},
  stringstyle=\color{codepurple},
  basicstyle=\ttfamily\footnotesize,
  breakatwhitespace=false,         
  breaklines=true,                 
  captionpos=b,                    
  keepspaces=true,                 
  numbers=left,                    
  numbersep=5pt,                  
  showspaces=false,                
  showstringspaces=false,
  showtabs=false,                  
  tabsize=2
}
\title{Differentiable Bandit Exploration}
\author{
  Craig Boutilier \\
  Google Research \\
  \And
  Chih-Wei Hsu \\
  Google Research \\
  \And
  Branislav Kveton \\
  Google Research \\
  \And
  Martin Mladenov \\
  Google Research \\
  \And
  Csaba Szepesv\'{a}ri \\
  DeepMind / University of Alberta \\
  \And
  Manzil Zaheer \\
  Google Research
}
\begin{document}

\maketitle

\begin{abstract}
Exploration policies in Bayesian bandits maximize the average reward over problem instances drawn from some distribution $\cP$. In this work, we \emph{learn} such policies for an unknown distribution $\cP$ using samples from $\cP$. Our approach is a form of meta-learning and exploits properties of $\cP$ without making strong assumptions about its form. To do this, we parameterize our policies in a differentiable way and optimize them by policy gradients, an approach that is general and easy to implement. We derive effective gradient estimators and introduce novel variance reduction techniques. We also analyze and experiment with various bandit policy classes, including neural networks and a novel softmax policy. The latter has regret guarantees and is a natural starting point for our optimization. Our experiments show the versatility of our approach. We also observe that neural network policies can learn implicit biases expressed only through the sampled instances.
\end{abstract}


\section{Introduction}
\label{sec:introduction}

A \emph{stochastic bandit} \cite{lai85asymptotically,auer02finitetime,lattimore19bandit} is an online learning problem where a \emph{learning agent} sequentially pulls arms with stochastic rewards. The agent aims to maximize its expected cumulative reward over some horizon. It does not know the mean rewards of the arms \emph{a priori} and learns them by pulling the arms. This induces the well-known \emph{exploration-exploitation trade-off}: \emph{explore}, and learn more about an arm; or \emph{exploit}, and pull the arm with the highest estimated reward. In a clinical trial, the \emph{arm} might be a treatment and its \emph{reward} is the outcome of that treatment for a patient.

Bandit algorithms are typically designed to have \emph{low regret}, worst-case or instance-dependent, for some problem class of interest to the algorithm designer \cite{lattimore19bandit}. While regret guarantees are reassuring, this approach often results in algorithms that are overly conservative, since they do not exploit the full properties of the problem class or objective. We explore an alternative view,
which is to \emph{learn} a bandit algorithm. Specifically, we assume that the agent has access to bandit instances sampled from an unknown distribution $\cP$ and attempts to learn a bandit algorithm that achieves high \emph{Bayes reward}, the average reward over the instances drawn from $\cP$. In essence, we automate the learning of policies for \emph{Bayesian bandits} \cite{berry85bandit}. Our approach can be viewed as a form of \emph{meta-learning} \cite{thrun96explanationbased,thrun98lifelong,baxter98theoretical,baxter00model} with gradient ascent \cite{finn17modelagnostic}.

A classic approach to Bayesian bandits is to design Bayes optimal policies \cite{gittins79bandit,gittins11multiarmed}, which take a simple form for specific priors $\cP$. Our approach is more general, since it makes minimal assumptions about $\cP$ and optimized policies. It is also more computationally efficient and easier to parallelize. However, we lose guarantees on Bayes optimality. Another line of work \cite{russo14learning,wen15efficient,russo16information} bounds the Bayes regret of classic bandit policies. These policies also have instance-dependent regret bounds and thus are more conservative than our work, where we directly optimize the Bayes reward.

Overall, our aim is to make learning of bandit policies as straightforward as applying gradient descent to supervised learning problems. We take the following steps toward this goal. First, we carefully formulate the problem of policy-gradient optimization of the Bayes reward of bandit policies. Second, we derive the reward gradient and propose novel baseline subtraction methods that reduce the variance of its empirical estimate. These methods are tailored to the bandit structure of our problem and are critical to making our approach practical. Third, we show how to differentiate several softmax bandit policies: \expthree, \softelim, and neural networks with a softmax output layer. \softelim is a new algorithm where the probability of pulling an arm is directly parameterized. We prove that its $n$-round regret is sublinear in $n$ for any $K$-armed bandit, as in \ucb \cite{auer02finitetime} and Thompson sampling (\ts) \cite{thompson33likelihood,agrawal12analysis}. However, unlike \ucb and \ts, \softelim is easy to optimize. Finally, we evaluate our methodology empirically on a range of bandit problems, highlighting the versatility of our approach. We also show that neural network policies can learn interesting biases encoded in the prior distribution $\cP$.


\section{Setting}
\label{sec:setting}

We define $[n] = \set{1, \dots, n}$. A \emph{Bayesian multi-armed bandit} \cite{gittins79bandit,berry85bandit} is an online learning problem where the learning agent interacts with problem instances that are drawn i.i.d.\ from a known prior distribution. Let $K$ be the number of arms, $n$ be the number of rounds, and $\cP$ be a \emph{prior distribution} over \emph{problem instances}. Each instance $P$ is a joint probability distribution over arm rewards with support $[0, 1]^K$. Let $Y_{i, t}$ be the reward of arm $i \in [K]$ in round $t \in [n]$ and $Y_t = (Y_{1, t}, \dots, Y_{K, t})$ be the vector of all rewards in round $t$. Before the agent starts interacting, we sample $P \sim \cP$ and $Y_t \sim P$ for all $t \in [n]$. Then, in each round round $t$, the agent \emph{pulls} arm $I_t \in [K]$ and gains its \emph{reward} $Y_{I_t, t}$. The agent knows $\cP$ but not the realized instance $P$.

We define $I_{i : j} = (I_i, \dots, I_j)$ and $Y_{i : j} = (Y_i, \dots, Y_j)$, with the corresponding $n$-round quantities being $I = I_{1 : n}$ and $Y = Y_{1 : n}$. Let $H_t = (I_1, \dots, I_t, Y_{I_1, 1}, \dots, Y_{I_t, t})$ be the \emph{history} of the learning agent in the first $t$ rounds, its pulled arms and rewards. The agent implements a \emph{randomized policy}. We denote by
\begin{align}
  p_\theta(i \mid H_{t - 1})
  \label{eq:policy}
\end{align}
the probability of pulling arm $i$ in round $t$ conditioned on history $H_{t - 1}$ up to that round. The policy is parameterized by $\theta \in \Theta$, where $\Theta$ is the space of feasible parameters. Thus $I_t \sim p_\theta(\cdot \mid H_{t - 1})$.

The \emph{$n$-round Bayes reward} of policy $\theta$ is $r(n; \theta) = \E{\sum_{t = 1}^n Y_{I_t, t}}$, where the expectation is over instances $P$, reward realizations $Y$, and arm choices $I_t$. The goal of the agent is to learn a policy $\theta_\ast = \argmax_{\theta \in \Theta} r(n; \theta)$ that maximizes the Bayes reward. This is equivalent to minimizing the \emph{$n$-round Bayes regret},
\begin{align}
  R(n; \theta)
  = \E{\sum_{t = 1}^n Y_{i_\ast(P), t} - \sum_{t = 1}^n Y_{I_t, t}}\,,
  \label{eq:bayes regret}
\end{align}
where $i_\ast(P) = \argmax_{i \in [K]} \condE{Y_{i, 1}}{P}$ is the best arm in problem instance $P$.


\section{Policy Optimization}
\label{sec:policy optimization}

We develop \gradband (\cref{alg:gradband}), an iterative gradient-based algorithm for optimizing bandit policies. \gradband is initialized with policy $\theta_0 \in \Theta$. At iteration $\ell$, the previous policy $\theta_{\ell - 1}$ is updated by gradient ascent using $\hat{g}(n; \theta_{\ell - 1})$, an empirical estimate of the reward gradient, $\nabla_\theta r(n; \theta_{\ell - 1})$, at $\theta_{\ell - 1}$. We compute $\hat{g}(n; \theta_{\ell - 1})$ by running $\theta_{\ell - 1}$ on $m$ instances sampled from $\cP$. We denote the $j$-th instance by $P^j$, its realized rewards by $Y^j \in [0, 1]^{K \times n}$, and its pulled arms by $I^j \in [K]^n$. The per-iteration time complexity of \gradband is $m K n$, since we sample $m$ problem instances from $\cP$ with horizon $n$ and $K$ arms, and run a policy in each.

Interestingly, \gradband does not require knowledge of $\cP$ nor it needs the sampled problem instances $(P^j)_{j = 1}^m$. This is because the computation of $\hat{g}(n; \theta_{\ell - 1})$ only requires realized rewards $(Y^j)_{j = 1}^m$ and pulled arms $(I^j)_{j = 1}^m$. So our assumption that $\cP$ is known merely simplifies the exposition.

\gradband is simple and general, because it makes no strong assumptions on the class of optimized policies, beyond the existence of $\nabla_\theta r(n; \theta)$. However, since $r(n; \theta)$ is a complex function of the adaptive policy $\theta$ and $\cP$, it is unclear if gradient ascent can ever converge to the best policy in $\Theta$. We provide the first such guarantee for this type of learned bandit policies below.

\begin{restatable}[]{theorem}{concavity}
\label{thm:concavity} Let $\cP$ be a prior distribution over $2$-armed Gaussian bandits $P$ where $Y_{i, t} \sim \cN(\mu_i, 1)$ and $\mu_i = \condE{Y_{i, 1}}{P}$. Let the policy class be an \emph{explore-then-commit} policy \cite{langford08epochgreedy} with parameter $\theta \in [1, \floors{n / 2}]$, which explores each arm $\bar{\theta} = \floors{\theta} + Z$ times and $Z \sim \mathrm{Ber}(\theta - \floors{\theta})$. Then $r(n; \theta)$ is concave in $\theta$ for any horizon $n \geq 2$.
\end{restatable}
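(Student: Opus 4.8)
The plan is to reduce the statement to a second-difference inequality for integer exploration lengths, and then to establish a sharper continuous-convexity fact that implies it. First I would dispose of the randomization. For $\theta \in [k, k+1]$ with $k = \floors{\theta}$, the policy runs plain explore-then-commit with either $k$ or $k+1$ pulls per arm, with probabilities $1-(\theta-k)$ and $\theta-k$. Since the Bayes reward averages over $Z$, we get $r(n;\theta) = (1-(\theta-k))\,r(n;k) + (\theta-k)\,r(n;k+1)$, so $r(n;\cdot)$ is exactly the piecewise-linear interpolant of its values at integers. Concavity on $[1,\floors{n/2}]$ is therefore equivalent to the discrete concavity $r(n;k+1)-2r(n;k)+r(n;k-1)\le 0$.

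Next I would compute $r(n;k)$ in closed form. In the $2k$ exploration rounds each arm is pulled exactly $k$ times, contributing $k\,\E{\mu_1+\mu_2}$; in the remaining $n-2k$ rounds the policy commits to the empirically better arm $\hat I$, contributing $(n-2k)\,\E{\mu_{\hat I}}$. Writing $\Delta = \mu_1 - \mu_2$ and using Gaussianity, namely $\hat\mu_1 - \hat\mu_2 \sim \cN(\Delta, 2/k)$, the per-round commitment loss is $\E{\max(\mu_1,\mu_2)} - \E{\mu_{\hat I}} = \rho(k)$ with $\rho(k) = \E{\abs{\Delta}\,\Phi(-\abs{\Delta}\sqrt{k/2})}$, where $\Phi$ is the standard normal CDF. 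Using $2\,\E{\max(\mu_1,\mu_2)} - \E{\mu_1+\mu_2} = \E{\abs{\Delta}}$, this collapses to the clean form $r(n;k) = n\,\E{\max(\mu_1,\mu_2)} - k\,\E{\abs{\Delta}} - (n-2k)\,\rho(k)$. The first two terms are linear in $k$, so concavity of $r(n;\cdot)$ reduces to convexity of $h(k) = (n-2k)\,\rho(k)$.

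I would then prove that $h$ is convex as a function of a \emph{continuous} variable $k\in[1,\floors{n/2}]$, which is stronger than, and implies, the needed discrete concavity after interpolation. Since $h(k) = \E{f(k;\Delta)}$ with $f(k;\Delta) = \abs{\Delta}\,(n-2k)\,\Phi(-\abs{\Delta}\sqrt{k/2})$, it suffices to show $f(\cdot;\Delta)$ is convex for each fixed $\Delta$ and to justify differentiating under the expectation. A direct computation of $\partial_k^2 f$ factors as a manifestly positive prefactor times the bracket $6k + n + \tfrac{1}{2}\Delta^2 k\,(n-2k)$; here the constraint $\theta \le \floors{n/2}$, i.e. $n-2k\ge 0$, is exactly what makes the last term nonnegative and hence $\partial_k^2 f \ge 0$. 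Differentiation under $\E{\cdot}$ is legitimate because, for $k\ge 1$, the Gaussian factor $e^{-\Delta^2 k/4}$ dominates the polynomial growth in $\abs{\Delta}$ uniformly on compact $k$-intervals, so no moment assumptions on $\cP$ are needed. Convexity of $h$ then follows by linearity of expectation, and since the piecewise-linear interpolant of a concave function has nonincreasing slopes it is itself concave, which closes the argument.

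The main obstacle is the second-derivative computation and, more conceptually, recognizing that the domain restriction $\theta\le\floors{n/2}$ is essential: without $n-2k\ge 0$ the term $\Delta^2 k(n-2k)$ can be large and negative, so for priors placing mass on large $\abs{\Delta}$ the functions $f(\cdot;\Delta)$ and hence $r(n;\cdot)$ would fail to be concave. The remaining pieces — the interpolation reduction and the exploration/exploitation decomposition — are routine once Gaussianity is used to evaluate the commitment probability in closed form.
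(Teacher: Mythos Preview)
Your proposal is correct and follows essentially the same architecture as the paper: closed-form reward via Gaussianity, reduction to convexity of $(n-2k)\,\Phi(-|\Delta|\sqrt{k/2})$, and piecewise-linear interpolation to handle the randomized $\bar\theta$.

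The one substantive difference is in how the convexity of the key term is established. The paper avoids any second-derivative computation by invoking the lemma that a product of two nonnegative, decreasing, convex functions is again nonnegative, decreasing, and convex; it applies this with $f(\theta)=\Phi(-\Delta\sqrt{\theta/2})$ and $g(\theta)=n-2\theta$ (the latter being linear and, crucially, nonnegative on the domain). You instead compute $\partial_k^2 f$ explicitly and read off the sign from the bracket $6k+n+\tfrac12\Delta^2k(n-2k)$. Both arguments use the domain restriction $n-2k\ge 0$ in exactly the same place, and both work instance-by-instance in $\Delta$ before averaging. The paper's route is shorter and reusable; yours is self-contained and gives an explicit quantitative expression for the curvature, which could be useful if one later wanted, say, strong-concavity constants.
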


The claim is proved in \cref{sec:concavity proof}. The key insight is that $r(n; \theta)$ of the explore-then-commit policy in a $2$-armed Gaussian bandit has a closed form, differentiable with respect to $\theta$. The randomization in \cref{thm:concavity} is only needed to extend the policy to continuous exploration horizons $\theta$. Note that in this case \gradband enjoys the same convergence guarantees as gradient descent for convex functions.

\gradband is a meta-algorithm. To fully exploit its power, we must specify the policy class $\Theta$ and compute the empirical gradient $\hat{g}(n; \theta_{\ell - 1})$. In \cref{sec:reward gradient}, we derive the gradient and show how to reduce its variance. In \cref{sec:differentiable algorithms}, we study several differentiable bandit policies. Before we proceed, we relate our objective and algorithm design to prior work.

\textbf{Stochastic multi-armed bandits:} Our objective, the maximization of $\E{\sum_{t = 1}^n Y_{I_t, t}}$, differs from maximizing $\condE{\sum_{t = 1}^n Y_{I_t, t}}{P}$ in any problem instance $P$, which is standard in bandits \cite{lai85asymptotically,auer02finitetime,lattimore19bandit}. The latter objective is more demanding, as it requires optimizing equally for likely and unlikely instances $P \sim \cP$. Our objective is more appropriate when $\cP$ can be estimated from data and the average reward is preferred to guarding against worst-case failures.

\textbf{Bayesian bandits:} Early works on Bayesian bandits \cite{gittins79bandit,berry85bandit,gittins11multiarmed} focus on deriving Bayes optimal policies, which require specific conjugate priors $\cP$. We do not make any such assumptions on the form of $\cP$. However, we do lose Bayes optimality guarantees, as the optimal policy may not lie in the chosen policy class $\Theta$. Since \gradband differentiates policies, it can be computationally costly. Nevertheless, it is less costly and easier to parallelize than the computation of typical Bayes optimal policies (\cref{sec:policy optimization experiment}).

\textbf{Reinforcement learning:} Learning of policy $\theta$ is also an instance of \emph{reinforcement learning (RL)} \cite{sutton88learning}, where the \emph{state} in round $t$ is history $H_{t - 1}$, the \emph{action} is the pulled arm $I_t$, and the \emph{reward} is the reward of the pulled arm $Y_{I_t, t}$. The main challenge is that the number of dimensions in $H_t$ increases linearly with round $t$. So any RL method that solves this problem must introduce some structure to deal with the \emph{curse of dimensionality}. Since it is not clear what the shape of the value function might be, we opt for optimizing parametric bandit policies (\cref{sec:differentiable algorithms}) by policy gradients \cite{williams92simple}. The main novelty in our application of policy gradients are baseline subtraction techniques that are tailored to the bandit structure of our problem.

\begin{algorithm}[t]
  \caption{Gradient-based optimization of bandit policies.}
  \label{alg:gradband}
  \begin{algorithmic}[1]
    \State \textbf{Inputs:} Initial policy $\theta_0 \in \Theta$, number of iterations $L$, learning rate $\alpha$, and batch size $m$
    \Statex \vspace{-0.075in}
    \For{$\ell = 1, \dots, L$}
      \For{$j = 1, \dots, m$}
        \State Sample $P^j \sim \cP$; sample $Y^j \sim P^j$; and apply policy $\theta_{\ell - 1}$ to $Y^j$ and obtain $I^j$
      \EndFor
      \State Let $\hat{g}(n; \theta_{\ell - 1})$ be an estimate of $\nabla_\theta r(n; \theta_{\ell - 1})$ from $(Y^j)_{j = 1}^m$ and $(I^j)_{j = 1}^m$
      \State $\theta_\ell \gets \theta_{\ell - 1} +
      \alpha \, \hat{g}(n; \theta_{\ell - 1})$
    \EndFor
    \Statex \vspace{-0.075in}
    \State \textbf{Output:} Learned policy $\theta_L$
  \end{algorithmic}
\end{algorithm}


\section{Reward Gradient}
\label{sec:reward gradient}

For any policy $\theta$, the reward gradient $\nabla_\theta r(n; \theta)$ 
takes the following form.

\begin{restatable}[]{theorem}{gradient}
\label{thm:gradient} For all rounds $t \in [n]$, let $b_t: [K]^{t - 1} \times [0, 1]^{K \times n} \to \realset$ be any function of previous $t - 1$ pulled arms and all reward realizations. Then
\begin{align*}
  \nabla_\theta r(n; \theta)
  = \sum_{t = 1}^n \mathbb{E}\Bigg[
  \nabla_\theta \log p_\theta(I_t \mid H_{t - 1})
  \left(\sum_{s = t}^n Y_{I_s, s} - b_t(I_{1 : t - 1}, Y)\right)\Bigg]\,.
\end{align*}
\end{restatable}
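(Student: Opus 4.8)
The plan is to treat the Bayes reward as a policy gradient over the randomized arm choices and peel off the two claimed simplifications in turn. First I would observe that neither the prior $\cP$ nor the per-instance reward law $P$ depends on $\theta$, so the only $\theta$-dependence in the joint law of $(P, Y, I)$ sits in the arm sequence. Conditioning on $(P, Y)$, the conditional probability of an arm sequence $i_{1:n}$ factorizes as $\pi_\theta(i_{1:n} \mid Y) = \prod_{t=1}^n p_\theta(i_t \mid H_{t-1})$, where each $H_{t-1}$ is a deterministic function of $(i_{1:t-1}, Y)$. Differentiating under the finite sum over $i_{1:n}$ and the outer integral over $(P, Y)$ — justified since rewards lie in $[0,1]$ and $p_\theta$ is assumed smooth and positive on its support — and applying the log-derivative identity $\nabla_\theta \pi_\theta = \pi_\theta \nabla_\theta \log \pi_\theta = \pi_\theta \sum_{t=1}^n \nabla_\theta \log p_\theta(I_t \mid H_{t-1})$ yields the vanilla estimator $\nabla_\theta r(n;\theta) = \E{(\sum_{t=1}^n \nabla_\theta \log p_\theta(I_t \mid H_{t-1}))(\sum_{s=1}^n Y_{I_s, s})}$.

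Next I would reduce the full return to the reward-to-go. The core tool is the score-function identity $\condE{\nabla_\theta \log p_\theta(I_t \mid H_{t-1})}{H_{t-1}} = \sum_i \nabla_\theta p_\theta(i \mid H_{t-1}) = \nabla_\theta 1 = 0$, which holds because $I_t \sim p_\theta(\cdot \mid H_{t-1})$ given the history. Expanding the double sum into terms indexed by $(t, s)$ and invoking the tower property, every cross term with $s < t$ vanishes: $Y_{I_s, s}$ is literally a component of $H_{t-1}$, hence $H_{t-1}$-measurable, so conditioning on $H_{t-1}$ factors it out and leaves the zero-mean score. This collapses the return to $\sum_{s=t}^n Y_{I_s, s}$ and produces the baseline-free version of the claim.

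Finally, to insert the baseline I would show $\E{\nabla_\theta \log p_\theta(I_t \mid H_{t-1}) \, b_t(I_{1:t-1}, Y)} = 0$ for every $t$. The essential structural fact is that $b_t$ depends on $I_{1:t-1}$ and on the reward matrix $Y$ but \emph{not} on the pulled arm $I_t$, and that, since the policy reads only $H_{t-1}$, the choice $I_t$ is conditionally independent of $(P, Y)$ given $H_{t-1}$; consequently $\condprob{I_t = i}{I_{1:t-1}, Y} = p_\theta(i \mid H_{t-1})$. Conditioning on $(I_{1:t-1}, Y)$ then makes $b_t$ a constant that factors out, while the conditional expectation of the score is again zero. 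I expect this baseline step, rather than the bookkeeping, to be the delicate point: unlike textbook baselines that depend only on the current state, here $b_t$ may depend on the entire realization $Y$, including future rewards $Y_{\cdot, s}$ with $s \ge t$, so the cancellation is driven by $b_t$'s independence of $I_t$ together with the conditional independence of $I_t$ from those future rewards — not by $H_{t-1}$-measurability of $b_t$, which fails. I would therefore state the conditioning $\sigma$-algebra precisely. Subtracting these zero-expectation terms from each summand gives exactly the stated gradient.
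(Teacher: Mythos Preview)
Your proposal is correct and follows essentially the same approach as the paper: both arguments condition on $(P,Y)$, factorize the arm-sequence law, apply the score-function identity, and then handle the baseline by conditioning on $(I_{1:t-1}, Y)$ and invoking $\condprob{I_t = i}{I_{1:t-1}, Y} = p_\theta(i \mid H_{t-1})$. The only cosmetic difference is how the reward-to-go form is obtained---the paper differentiates each $\condE{Y_{I_t,t}}{Y}$ separately (picking up scores for $s \le t$) and then swaps the double sum, whereas you start from the full REINFORCE estimator and kill the $s<t$ cross terms via the zero-mean score; these are two equivalent bookkeepings of the same computation.
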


The claim is proved in \cref{sec:gradient proof}. The collection of functions $b = (b_t)_{t = 1}^n$ in \cref{thm:gradient} is known as a \emph{baseline} \cite{williams92simple,sutton00policy}. The baseline does not change the gradient, since each $b_t$ is independent of future actions taken by policy $\theta$ starting at round $t$. This means that $b_t$ can depend on other quantities with this property, such as the problem instance $P$ and parameters $\theta$. For simplicity, we do not make any such dependence explicit in our notation. Baselines can often effectively reduce the variance of empirical gradients. The empirical gradient, for $m$ sampled instances in \gradband, is
\begin{align}
  \hat{g}(n; \theta)
  = \frac{1}{m} \sum_{j = 1}^m \sum_{t = 1}^n
  \nabla_\theta \log p_\theta(I_t^j \mid H_{t - 1}^j)
  \left(\sum_{s = t}^n Y_{I_s^j, s}^j - b_t(I_{1 : t - 1}^j, Y^j)\right)\,,
  \label{eq:empirical baseline gradient}
\end{align}
where $j$ indexes the $j$-th random experiment in \gradband.

Now we discuss three baselines. \emph{No baseline} is a trivial baseline $b_t^\textsc{none}(I_{1: t - 1}, Y) = 0$. This baseline performs poorly, even when learning bandit policies at short horizons (\cref{sec:policy optimization experiment}).

Our second baseline is $b_t^\textsc{opt}(I_{1 : t - 1}, Y) = \sum_{s = t}^n Y_{i_\ast(P), s}$, where $i_\ast(P)$ is the best arm in instance $P$, as defined in \eqref{eq:bayes regret}. This baseline is suitable for bandit policies with regret guarantees. Specifically, if the policy has a sublinear regret with a high probability for any $P$, $\sum_{t = 1}^n Y_{i_\ast(P), t} - Y_{I_t, t} = o(n)$ and thus $\sum_{s = t}^n b_t(I_{1 : t - 1}, Y) - Y_{I_s, s} = o(n)$ for any $s \in [n]$; both with a high probability for any $P$.

One limitation of $b^\textsc{opt}$ is that the best arm may be unknown, for instance if $\gradband$ was only given sampled realized rewards $Y^j$ but not sampled instances $P^j$. This motivates our third baseline, which is the reward of an independent run of policy $\theta$. Let $(J_t)_{t = 1}^n$ be the arms pulled in that run. Then $b_t^\textsc{self}(I_{1 : t - 1}, Y) = \sum_{s = t}^n Y_{J_s, s}$. Similarly to $b^\textsc{opt}$, $b^\textsc{self}$ is suitable for any policy that concentrates on a single arm over time. Unlike $b^\textsc{opt}$, it does not need to know the best arm.


\section{Differentiable Algorithms}
\label{sec:differentiable algorithms}

Our work assumes that $\nabla_\theta \log p_\theta(I_t \mid H_{t - 1})$ in \cref{thm:gradient} exists, that the policy is differentiable. However, existing bandit algorithms do not seem to fit this paradigm. For instance, UCB algorithms \cite{auer02finitetime,dani08stochastic,abbasi-yadkori11improved} are not differentiable because $p(i \mid H_{t - 1}) \in \set{0, 1}$ is a step function. While TS \cite{thompson33likelihood,agrawal12analysis,agrawal13thompson} is randomized, $p(i \mid H_{t - 1})$ is induced by a hard maximization over random variables. Therefore, a unique gradient may not always exist. Even if it does, $p(i \mid H_{t - 1})$ does not have a closed form and thus its differentiation is expected to be computationally costly.

In the rest of this section, we introduce three softmax designs that can be differentiated analytically and derive a gradient for each of them. All gradients are conditioned on a fixed round $t$ and history $H_{t - 1}$. To simplify notation, we define $p_{i, t} = p_\theta(i \mid H_{t - 1})$. Note that the $\varepsilon$-greedy policy \cite{sutton98reinforcement} and Boltzmann exploration \cite{sutton98reinforcement,cesabianchi17boltzmann} are also differentiable, although we do not study them here.

\subsection{Algorithm \expthree}
\label{sec:exp3}

\expthree \cite{auer95gambling} is a non-stochastic bandit algorithm, where the probability of pulling arm $i$ in round $t$ is
\begin{align}
  \textstyle
  p_{i, t}
  = \theta / K +
  (1 - \theta) \exp[\eta S_{i, t}] \big/ \sum_{j = 1}^K \exp[\eta S_{j, t}]\,,
  \label{eq:exp3 distribution}
\end{align}
where $S_{i, t}$ are \emph{sufficient statistics} of arm $i$ in round $t$, $\eta$ is a learning rate, and $\theta$ is a parameter that guarantees sufficient exploration. The statistic $S_{i, t}$ is the estimated cumulative reward of arm $i$ in the first $t - 1$ rounds, $S_{i, t} = \sum_{\ell = 1}^{t - 1} \I{I_\ell = i} p_{i, \ell}^{-1} Y_{i, \ell}$. When rewards are $[0, 1]$, $\expthree$ has $O(\sqrt{n K})$ regret for $\eta = \theta / K$ and $\theta = \min \set{1, \sqrt{K \log K} / \sqrt{(e - 1) n}}$. In this work, we \emph{optimize} the choice of $\theta$ using policy gradients. When $\eta$ is set as above, we get the following gradient.

\begin{restatable}[]{lemma}{expthreederivative}
\label{lem:exp3 derivative} Define $p_{i, t}$ as in \eqref{eq:exp3 distribution}.  Let $\eta = \theta / K$, $V_{i, t} = \exp[\theta S_{i, t} / K]$, and $V_t = \sum_{j = 1}^K V_{j, t}$. Then
\begin{align*}
  \nabla_\theta \log p_{i, t} =
  \frac{1}{p_{i, t}} \left[\frac{V_{i, t}}{V_t}
  \left[(1 - \theta) \left[\frac{S_{i, t}}{K} -
  \sum_{j = 1}^K \frac{V_{j, t}}{V_t} \frac{S_{j, t}}{K}\right] - 1\right] +
  \frac{1}{K}\right]\,.
\end{align*}
\end{restatable}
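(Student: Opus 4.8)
The plan is to compute the gradient directly by the chain rule $\nabla_\theta \log p_{i,t} = (\nabla_\theta p_{i,t})/p_{i,t}$, so the whole task reduces to differentiating the closed form of $p_{i,t}$ with respect to $\theta$. First I would rewrite the pulling probability in \eqref{eq:exp3 distribution} compactly using the given notation: with $\eta = \theta/K$ and $V_{i,t} = \exp[\theta S_{i,t}/K]$, we have $\exp[\eta S_{i,t}] = V_{i,t}$ and $\sum_{j=1}^K \exp[\eta S_{j,t}] = V_t$, so that
\[
  p_{i,t} = \frac{\theta}{K} + (1-\theta)\,\frac{V_{i,t}}{V_t}.
\]
The key observation is that, conditioned on the fixed round $t$ and history $H_{t-1}$, the sufficient statistics $S_{i,t}$ are constants; hence $\theta$ enters $p_{i,t}$ in exactly three places: the uniform term $\theta/K$, the mixing coefficient $(1-\theta)$, and the learning rate inside each $V_{i,t}$. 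Each of these is differentiated, while the $S_{i,t}$ are held fixed.

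Next I would differentiate the building blocks. Since $V_{i,t} = \exp[\theta S_{i,t}/K]$, we get $\nabla_\theta V_{i,t} = (S_{i,t}/K)\,V_{i,t}$, and summing over arms, $\nabla_\theta V_t = \sum_{j=1}^K (S_{j,t}/K)\,V_{j,t}$. Applying the quotient rule to the softmax weight then gives
\[
  \nabla_\theta \frac{V_{i,t}}{V_t} = \frac{V_{i,t}}{V_t}\left[\frac{S_{i,t}}{K} - \sum_{j=1}^K \frac{V_{j,t}}{V_t}\frac{S_{j,t}}{K}\right],
\]
which is just the familiar softmax-Jacobian identity specialized to a scalar parameter.

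Then I would assemble $\nabla_\theta p_{i,t}$ using the product rule on $(1-\theta)\,V_{i,t}/V_t$: differentiating the mixing coefficient contributes $-V_{i,t}/V_t$, differentiating the softmax weight contributes $(1-\theta)$ times the displayed identity, and the uniform term contributes $1/K$. Collecting these,
\[
  \nabla_\theta p_{i,t} = \frac{1}{K} + \frac{V_{i,t}}{V_t}\left[(1-\theta)\left[\frac{S_{i,t}}{K} - \sum_{j=1}^K \frac{V_{j,t}}{V_t}\frac{S_{j,t}}{K}\right] - 1\right],
\]
and dividing by $p_{i,t}$ yields exactly the claimed expression.

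The computation is elementary; the only points requiring care are (i) justifying that $S_{i,t}$ is treated as $\theta$-independent because we condition on $H_{t-1}$, so that the gradient captures only the explicit dependence on $\theta$ in the policy formula, and (ii) bookkeeping the three separate appearances of $\theta$ so that the product and quotient rules are combined without dropping the $-V_{i,t}/V_t$ term coming from the mixing coefficient. I expect the latter bookkeeping to be the main (and essentially only) obstacle, since the easy mistake is to differentiate the softmax weight while forgetting that the prefactor $(1-\theta)$ is itself a function of $\theta$.
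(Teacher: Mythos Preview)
Your proposal is correct and follows essentially the same approach as the paper: write $p_{i,t}=\theta/K+(1-\theta)V_{i,t}/V_t$, differentiate using the product rule on $(1-\theta)V_{i,t}/V_t$, and compute $\nabla_\theta(V_{i,t}/V_t)$ via the quotient rule to obtain the softmax-Jacobian expression. The paper's proof is slightly terser but otherwise identical; your explicit remark that $S_{i,t}$ is treated as $\theta$-independent given $H_{t-1}$ is a useful clarification the paper leaves implicit.
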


The claim is proved in \cref{sec:lemmas}. Although \expthree is differentiable, it is conservative in stochastic problems, even after we optimize $\theta$. Therefore, we propose a new algorithm \softelim.

\subsection{Algorithm \softelim}
\label{sec:softelim}

Our bandit algorithm works as follows. Each arm is initially pulled once. Let $\hat{\mu}_{i, t}$ be the empirical mean of arm $i$ after $t$ rounds and $T_{i, t}$ be the number of pulls of arm $i$ after $t$ rounds. Then in round $t > K$, arm $i$ is pulled with probability
\begin{align}
  \textstyle
  p_{i, t}
  = \exp[- S_{i, t} / \theta] \big/ \sum_{j = 1}^K \exp[- S_{j, t} / \theta]\,,
  \label{eq:softelim distribution}
\end{align}
where $S_{i, t} = 2 \, (\max_{j \in [K]} \hat{\mu}_{j, t - 1} - \hat{\mu}_{i, t - 1})^2 T_{i, t - 1}$ is the statistic associated with arm $i$ and $\theta > 0$ is a tunable \emph{exploration parameter}. Since $S_{i, t} \geq 0$, higher values of $\theta$ lead to more exploration. Also note that $\exp[- S_{i, t} / \theta] \in [0, 1]$. Therefore, our algorithm can be viewed as \say{soft} elimination \cite{auer10ucb} of arms with low empirical means. So we call it \softelim.

\softelim has two important properties. First, an arm is unlikely to be pulled if it has been pulled \say{often} and its empirical mean is low relative to the highest mean. Second, when a suboptimal arm has been pulled \say{often} and has the highest empirical mean, the optimal arm is pulled proportionally to how much its empirical mean deviates from the actual mean. This is why $\exp[- S_{i, t} / \theta]$ resembles the upper bound in Hoeffding's inequality. This latter property implies optimism.

Since $\log p_{i, t} = - \theta^{-1} S_{i, t} - \log \sum_{j = 1}^K \exp[- S_{j, t} / \theta]$, we have
\begin{align*}
  \textstyle
  \nabla_\theta \log p_{i, t}
  = \theta^{-2} \left(S_{i, t} -
  \sum_{j = 1}^K S_{j, t} \exp[- S_{j, t} / \theta] \big/
  \sum_{j = 1}^K \exp[- S_{j, t} / \theta]\right)\,.
\end{align*}
Therefore, \softelim can be easily differentiated and optimized by \gradband. \softelim also has a sublinear regret in any problem instance, as we show below.

\begin{restatable}[]{theorem}{regretbound}
\label{thm:regret bound} Let the \emph{expected $n$-round regret} of \softelim with parameter $\theta$ in problem instance $P$ be $R(n, P; \theta)$. Let $P$ be any $K$-armed bandit where arm $1$ is optimal, that is $\mu_1 > \max_{i > 1} \mu_i$. Let $\Delta_i = \mu_1 - \mu_i$ and $\theta = 8$. Then $R(n, P; \theta) \leq \sum_{i = 1}^K (2 e + 1) \left(16 \Delta_i^{-1} \log n + \Delta_i\right) + 5 \Delta_i$.
\end{restatable}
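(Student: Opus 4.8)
The plan is to bound the expected number of pulls of each suboptimal arm and then convert this into a regret bound via the standard decomposition $R(n,P;\theta) = \sum_{i=1}^K \Delta_i \, \E{T_{i,n}}$. So the heart of the argument is to show that for each suboptimal arm $i$, $\E{T_{i,n}} \leq (2e+1)(16\Delta_i^{-2}\log n + 1) + \text{const}$, which after multiplying by $\Delta_i$ yields the stated bound. The key object is the per-round pull probability $p_{i,t}$, which by \eqref{eq:softelim distribution} satisfies $p_{i,t} \leq \exp[-S_{i,t}/\theta]$ (since the denominator is at least $1$, as arm $1$ contributes at least $\exp[-S_{1,t}/\theta] \le 1$ but more usefully arm $i$'s own numerator is bounded by the sum). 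The statistic $S_{i,t} = 2(\max_j \hat\mu_{j,t-1} - \hat\mu_{i,t-1})^2 T_{i,t-1}$ resembles a Hoeffding exponent, which is precisely the design feature flagged in the text.

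First I would split the analysis into a \emph{good event} and a \emph{bad event}, mirroring the optimism discussion after \eqref{eq:softelim distribution}. On the good event, the empirical mean $\hat\mu_{1,t-1}$ of the optimal arm is close to $\mu_1$, so that $\max_j \hat\mu_{j,t-1} \geq \hat\mu_{1,t-1} \approx \mu_1$; then for a suboptimal arm $i$ that has already been pulled enough times, $\hat\mu_{i,t-1}$ concentrates near $\mu_i$, forcing the gap $\max_j \hat\mu_{j,t-1} - \hat\mu_{i,t-1}$ to be at least roughly $\Delta_i/2$, hence $S_{i,t} \gtrsim \Delta_i^2 T_{i,t-1}/2$ and $p_{i,t} \le \exp[-\Delta_i^2 T_{i,t-1}/(4\theta)]$. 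Summing this geometric-type bound over rounds gives the $16\Delta_i^{-2}\log n$ term once $T_{i,t-1}$ exceeds a threshold of order $\Delta_i^{-2}\log n$; I would isolate that threshold explicitly and bound the contribution before the threshold is crossed by the threshold itself.

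Second, I would control the \emph{bad event}, where $\hat\mu_{1,t-1}$ has drifted well below $\mu_1$. This is where the factor $2e+1$ should emerge: I expect to bound $\prob{\hat\mu_{1,t-1} - \mu_1 \le -\epsilon}$ via Hoeffding's inequality, and then sum the resulting probabilities over the number of pulls of arm $1$, producing a convergent series whose value contributes the constant and the $e$-dependent multiplier. The choice $\theta = 8$ is presumably calibrated so that the exponent $\Delta_i^2 T/(4\theta) = \Delta_i^2 T/32$ matches the Hoeffding rate $2(\Delta_i/2)^2 T = \Delta_i^2 T/2$ on the concentration side with the right constant to close the sum cleanly and yield exactly the coefficient $16$.

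\textbf{The hard part} will be handling the coupling between the optimal arm's concentration and the suboptimal arm's pull probability simultaneously across all rounds, since $T_{i,t-1}$ and $T_{1,t-1}$ are random and interdependent. I anticipate the cleanest route is a peeling or summation argument over the value of $T_{i,t-1}$: condition on the event that arm $i$ has been pulled $s$ times, bound the incremental probability of an $(s{+}1)$-th pull, and sum over $s$; the tail from the bad event must be shown to contribute only an additive constant rather than scaling with $n$. Keeping the constants aligned so that the three sources of error (the threshold term, the post-threshold geometric sum, and the bad-event tail) combine into the precise expression $(2e+1)(16\Delta_i^{-1}\log n + \Delta_i) + 5\Delta_i$ will require care, and I would defer the exact bookkeeping of these constants to the full proof in the appendix.
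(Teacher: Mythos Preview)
Your overall decomposition into a threshold phase, a ``good event'' phase, and a ``bad event'' phase is right in spirit and matches the paper's three-term split on $\{T_{i,t-1}\le m\}$, $\{T_{i,t-1}>m,\,T_{1,t-1}>m\}$, and $\{T_{i,t-1}>m,\,T_{1,t-1}\le m\}$. Your treatment of the first two pieces is essentially the paper's: bound the threshold phase by $m$, and on the good event use Hoeffding on both arms to force $\hat\mu_{\max,t-1}-\hat\mu_{i,t-1}\ge \Delta_i/2$, whence $p_{i,t}\le n^{-8\gamma}$.

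The gap is in your handling of the bad event. You propose to bound $\prob{\hat\mu_{1,t-1}\le\mu_1-\eps}$ by Hoeffding and sum over the number of pulls of arm~$1$. But that sum bounds, at best, the expected number of \emph{pulls of arm~$1$} on which its mean is underestimated; it does not bound the number of pulls of arm~$i$ during rounds when arm~$1$ is underpulled and underestimated. In that regime one can have $\hat\mu_{\max,t-1}=\hat\mu_{i,t-1}$, hence $S_{i,t}=0$ and $p_{i,t}$ bounded away from zero, so your peeling over $T_{i,t-1}$ does not produce a geometrically decreasing summand. Nothing in your plan prevents arm~$i$ from being pulled $\Theta(n)$ times while $T_{1,t-1}$ stays small.

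The missing idea is the paper's ratio step: when $\hat\mu_{\max,t-1}\le\mu_1-\Delta_i/4$ and $T_{i,t-1}>m$, one compares $p_{i,t}$ to $p_{1,t}$ rather than bounding $p_{i,t}$ outright, obtaining
\[
p_{i,t}\;\le\;\exp\!\bigl[2\gamma(\mu_1-\hat\mu_{1,t-1})^2 T_{1,t-1}\bigr]\,p_{1,t}\,.
\]
Because $\E{p_{1,t}\,f(H_{t-1})}=\E{\I{I_t=1}\,f(H_{t-1})}$, summing over $t$ reindexes by the pulls $s=1,\dots,m$ of arm~$1$, and the task reduces to showing the moment-generating-type bound $\E{\exp[2\gamma(\mu_1-\hat\mu_{1,s})^2 s]}\le 2e$ for $\gamma=1/8$, which is where the $2e$ (and hence $2e+1$) actually comes from. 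This coupling of $p_{i,t}$ to $p_{1,t}$, not a direct Hoeffding tail sum, is what forces arm~$1$ to be pulled often enough to self-correct, and it is the step you would need to add.
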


\cref{thm:regret bound} is proved in \cref{sec:analysis}, which also includes an informal argument. Note the our bound has the same standard dependence on gaps $\Delta_i$ and $\log n$ as \ucb \cite{auer02finitetime}. Thus it is near optimal.

\subsection{Recurrent Neural Network}
\label{sec:rnn}

Now we take designs \eqref{eq:exp3 distribution} and \eqref{eq:softelim distribution} a step further. Both are softmax on hand-crafted features, which facilitate theoretical analysis. We attempt to \emph{learn the features} using a \emph{recurrent neural network (RNN)}. The RNN works as follows. In round $t$, it takes arm $I_t$ and reward $Y_{I_t, t}$ as inputs, updates its state $\mathbf{s}_t$, and outputs the probability $p_{i, t + 1}$ of pulling each arm $i$ in the next round. That is,
\begin{align*}
  \textstyle
  \mathbf{s}_t
  = \textsc{RNN}_\Phi(\mathbf{s}_{t - 1}, (I_t, Y_{I_t, t}))\,, \quad
  p_{i, t + 1}
  = \exp[\mathbf{w}_i\T \mathbf{s}_t] \big/ 
  \sum_{j = 1}^K \exp[\mathbf{w}_j\T \mathbf{s}_t]\,.
\end{align*}
The optimized parameters $\theta = (\Phi, \set{\mathbf{w}_i}_{i = 1}^K)$ are the RNN parameters $\Phi$ and per-arm parameters $\mathbf{w}_j$. The aim for the RNN is to learn to track suitable sufficient statistics through its internal state $\mathbf{s}_t$. That state is initialized at $\mathbf{s}_0 = \mathbf{0}$. Our RNN is an LSTM \cite{hochreiter97long} with a $d$-dimensional latent state. We assume that the rewards are Bernoulli. The details of our implementation are in \cref{sec:rnn implementation}.


\section{Experiments}
\label{sec:experiments}

We conduct four experiments to demonstrate the generality and efficacy of our approach to learning bandit policies. In \cref{sec:reward gradient experiment}, we study the reward gradient and its variance in a simple problem. In \cref{sec:policy optimization experiment}, we optimize \expthree and \softelim policies on the same problem. In \cref{sec:more complex problems}, we study more complex bandit problems. In \cref{sec:rnn experiment}, we optimize RNN policies. The performance of policies is measured using the Bayes regret instead of the Bayes reward, since it offers a direct indication how close to optimal a policy is. Note that optimizing either optimizes the other. The regret is estimated from $1\,000$ i.i.d.\ samples from $\cP$, which are independent of the training samples used by \gradband. The shaded areas in plots show standard errors.

\subsection{Reward Gradient}
\label{sec:reward gradient experiment}

Our first experiment is on a Bayesian bandit with $K = 2$ arms. The first prior $\cP$ is simple and assigns probability $0.5$ to each of two bandit instances, with means $\mu = (0.6, 0.4)$ and $\mu = (0.4, 0.6)$. The reward distributions are Bernoulli and the horizon is $n = 200$ rounds.

The Bayes regret of \expthree and \softelim, as a function of their parameter $\theta$, is shown in \cref{fig:gradients}a. Both are unimodal in $\theta$ and suitable for optimization by \gradband. \softelim has a lower regret than \expthree for all $\theta$. In fact, the minimum regret of \expthree is greater than that of \softelim without tuning ($\theta = 1$). The reward gradients of \expthree and \softelim are reported in Figures \ref{fig:gradients}b and \ref{fig:gradients}c, respectively. We observe that baselines $b^\textsc{opt}$ and $b^\textsc{self}$ lead to orders of magnitude lower variance than no baseline $b^\textsc{none}$. The variance of \softelim gradients with $b^\textsc{opt}$ and $b^\textsc{self}$ is comparable, while the variance of \expthree gradients with $b^\textsc{self}$ is two orders of magnitude lower for higher values of $\theta$.

\subsection{Policy Optimization}
\label{sec:policy optimization experiment}

In the second experiment, we apply \expthree and \softelim to the problem in \cref{sec:reward gradient experiment}. The policies are optimized by \gradband using $\theta_0 = 1$, $L = 100$ iterations, learning rate $\alpha = c^{-1} L^{- \frac{1}{2}}$, and batch size $m = 1000$. The constant $c$ is chosen automatically so that $\norm{\hat{g}(n; \theta_0)} \leq c$ holds with a high probability, to avoid manual learning rate tuning in our experiments. We implement \gradband in TensorFlow on $112$ cores and with $392$ MB RAM.

\begin{figure*}[t]
  \centering
  \includegraphics[width=5.4in]{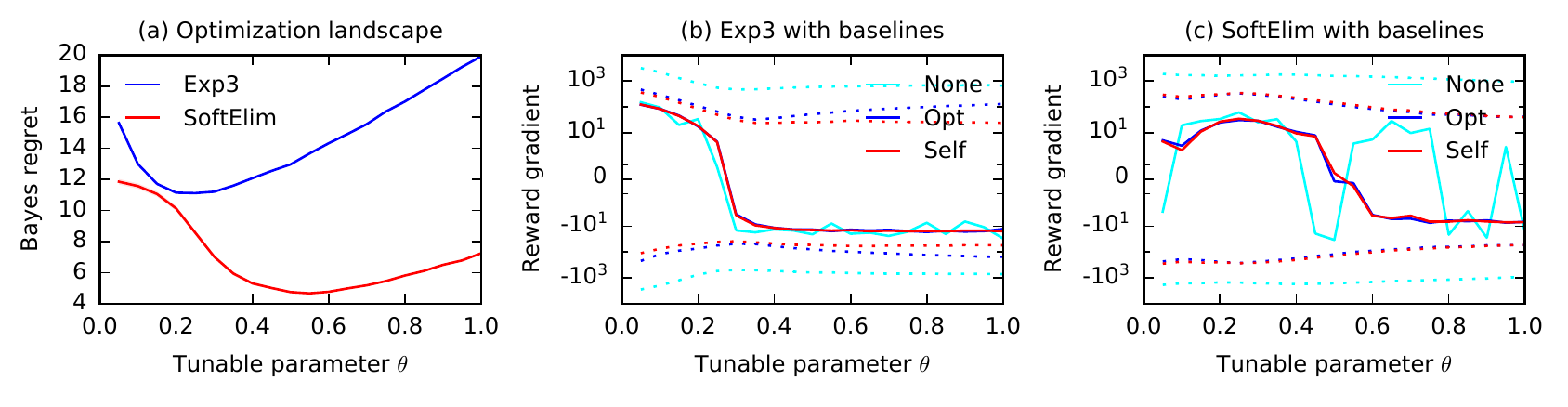} \\
  \vspace{-0.1in}
  \caption{The Bayes regret of \expthree and \softelim, and the corresponding reward gradients. In the last two plots, the solid lines are estimated reward gradients from $m = 10\,000$ runs and the dotted lines mark high-probability regions of empirical gradients, for $m = 1$.}
  \label{fig:gradients}
\end{figure*}

\begin{figure*}[t]
  \centering
  \includegraphics[width=5.4in]{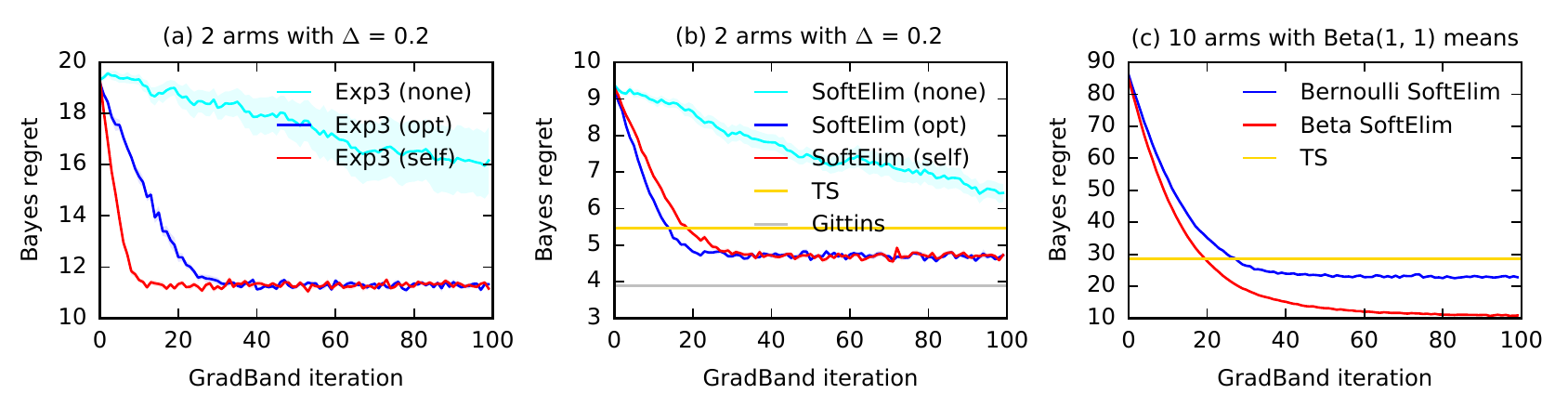} \\
  \vspace{-0.1in}
  \caption{The Bayes regret of \expthree and \softelim policies, as a function of \gradband iterations. We report the average over $10$ runs.}
  \label{fig:policy optimization}
\end{figure*}

In \cref{fig:policy optimization}a, we optimize \expthree with all baselines. With $b^\textsc{self}$, \gradband learns a near-optimal policy in fewer than $10$ iterations. This is consistent with \cref{fig:gradients}b, where $b^\textsc{self}$ has the least variance. In \cref{fig:policy optimization}b, we optimize \softelim with all baselines. The performance with $b^\textsc{opt}$ and $b^\textsc{self}$ is comparable. This consistent with \cref{fig:gradients}c, where the variances of $b^\textsc{opt}$ and $b^\textsc{self}$ are comparable. We conclude that $b^\textsc{self}$ is the best baseline overall and use it in all remaining experiments.

To assess the quality of our learned policies, we compare them to four well-known bandit policies: \ucb \cite{auer02finitetime}, Bernoulli \ts \cite{agrawal12analysis} with $\mathrm{Beta}(1, 1)$ prior, \ucbv \cite{audibert09exploration}, and the Gittins index \cite{gittins79bandit}. These benchmarks are ideal points of comparison: (i) \ucb is arguably the most popular bandit algorithm for $[0, 1]$ rewards. (ii) Bernoulli \ts is near-optimal for Bernoulli rewards, which we use in most experiments. We use randomized Bernoulli rounding \cite{agrawal12analysis} to apply \ts to $[0, 1]$ rewards. (iii) \ucbv adapts the sub-Gaussian parameter of its reward distributions based on past observations. This is similar to our optimization of $\theta$ in \softelim. (iv) The Gittins index gives the optimal solution to our problem, if the arm means were drawn i.i.d.\ from $\mathrm{Beta}(1, 1)$. Finally, we also use the Dopamine \cite{dopamine} implementation of DQN \cite{mnih13playing} where the state is a concatenation of the following statistics for each arm: the number of observed ones, the number of observed zeros, the logarithm of both counts incremented by $1$, the empirical mean, and a constant bias term.

The Bayes regret of our benchmarks is $9.95 \pm 0.03$ (\ucb), $5.47 \pm 0.05$ (\ts), $15.79 \pm 0.03$ (\ucbv), $3.89 \pm 0.07$ (Gittins index), and $16.81 \pm 1.05$ (DQN). The regret of \softelim is $4.75$, and falls between those of \ts and the Gittins index. We conclude that tuned \softelim outperforms a strong baseline, \ts; and performs almost as well as the Gittins index. We note that the Gittins index provides the optimal solution in limited settings, like Bernoulli bandits, but even there it is computationally costly. For instance, our computation of the Gittins index for horizon $n = 200$ took almost two days. In comparison, tuning of \softelim by \gradband takes about $20$ seconds.

Now we discuss failures of some benchmarks. \ucbv fails because its variance optimism induces too much initial exploration. This is harmful for the somewhat short horizons used in our experiments. DQN policies are unstable and require significant tuning to learn policies that outperform random actions; and still perform poorly. This stands in a stark contrast with the simplicity of \gradband, which learns near-optimal policies using gradient ascent. In the remaining experiments, we only discuss the most competitive benchmarks, the Gittins index and \ts. In \cref{sec:supplementary experiments}, we report the results for all benchmark bandit algorithms.

\subsection{More Complex Problems}
\label{sec:more complex problems}

In the third experiment, we apply \gradband to two more complex problems. In both, the number of arms is $K = 10$ and the mean reward of arm $i$ is $\mu_i \sim \mathrm{Beta}(1, 1)$. In the first, $Y_{i, t} \sim \mathrm{Ber}(\mu_i)$. In the second, $Y_{i, t} \sim \mathrm{Beta}(v \mu_i, v (1 - \mu_i))$ where $v = 4$ controls the variance of rewards. The horizon is $n = 1\,000$ rounds.

The regret of our policies is reported in \cref{fig:policy optimization}c. In the Bernoulli problem, the regret of tuned \softelim is less than $25$. By comparison, the regret of \ts is $28.57 \pm 0.45$. In the beta problem, the regret of tuned \softelim is close to $10$. The regret of \ts remains the same and is roughly three times that of \softelim. The poor performance of \ts is due to the Bernoulli rounding, which replaces low-variance beta rewards with high-variance Bernoulli rewards.

\subsection{RNN Policies}
\label{sec:rnn experiment}

\begin{figure*}[t]
  \centering
  \includegraphics[width=5.4in]{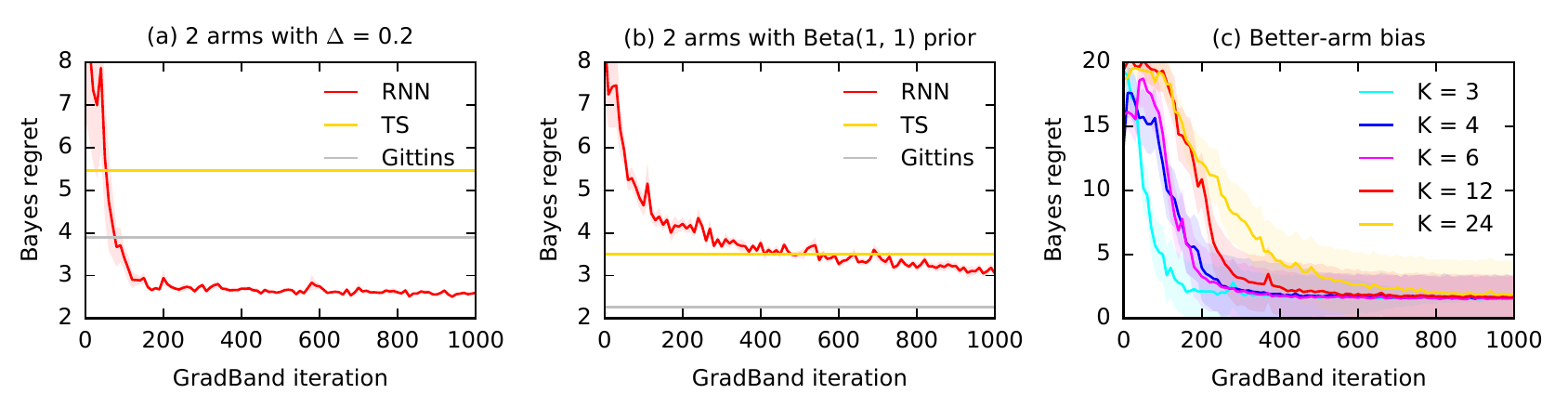} \\
  \vspace{-0.1in}
  \caption{The Bayes regret of RNN policies, as a function of \gradband iterations. We report the average over $10$ runs in the first two plots and the median in the last. The median excludes a few failed runs, which would skew the average.}
  \label{fig:rnn}
\end{figure*}

Our preliminary experiments showed that learning of RNN policies (\cref{sec:rnn}) over long horizons ($n = 200$) is challenging if we use our variance reduction baselines (\cref{sec:reward gradient}) alone. To mitigate this, we propose the use of \emph{curriculum learning} \cite{bengio09curriculum} to further reduce variance. The key idea is to apply \gradband successively to problems with increasing horizons. In this experiment, we consider a simple instance of this idea with two horizons: $n' = 20$ and $n = 200$. First, we optimize the RNN policy using \gradband at horizon $n'$. Then we take the learned policy and use it as the initial policy for \gradband optimization at horizon $n$. The number of \gradband iterations is $L = 1000$. We did not make any attempt to optimize this scheme.

The results from the second optimization phase are reported in \cref{fig:rnn}. \cref{fig:rnn}a shows learning of an RNN policy for the problem in \cref{sec:reward gradient experiment}. That policy outperforms both \ts and the Gittins index. This does not contradict theory, as the Gittins index is not Bayes optimal in this problem. In \cref{fig:rnn}b, we consider a variant of this problem where arm means are drawn i.i.d.\ from $\mathrm{Beta}(1, 1)$. The Gittins index is Bayes optimal in this problem, and so our learned RNN policy naturally does not outperform it. Nevertheless, it has a lower regret than \ts.

In the final experiment, we have a $K$-armed Bayesian bandit with Bernoulli rewards. The prior $\cP$ is over two problem instances, $\mu = (0.6, 0.9, 0.7, 0.7, \dots, 0.7)$ and $\mu = (0.2, 0.7, 0.9, 0.7, \dots, 0.7)$, which are equally likely. This problem has an interesting structure. The problem instance, and thus the optimal arm, can be identified by pulling arm $1$. Arms $4$ and beyond are \emph{distractors}. Our RNN policies do not learn this exact structure; but they learn another strategy specialized to this problem. The strategy pulls only arms $2$ or $3$, since these are the only arms that can be optimal. Thus, the RNN successfully learns to ignore the distractors. As a result, the Bayes regret of our policies (\cref{fig:rnn}c) does not increase with $K$. This would not happen with classic bandit algorithms.


\section{Related Work}
\label{sec:related work}

The regret of bandit algorithms can be reduced by tuning \cite{vermorel05multiarmed,maes12metalearning,kuleshov14algorithms,hsu19empirical}. None of these papers used policy gradients, neural network policies, or even the sequential structure of $n$-round rewards. \citet{duan16rl2} optimized a similar policy to \cref{sec:rnn} using an existing optimizer. This work does not formalize the objective clearly, relates it to Bayesian bandits, or studies policies that are provably sound (\cref{thm:concavity} and \cref{sec:softelim}). \citet{silver14deterministic} applied policy gradients to a continuous bandit problem with a quadratic cost function. Since the cost is convex in arms, this exploration problem is easier than with discrete arms.

Policy gradients in RL were proposed by \citet{williams92simple}, including the idea of baseline subtraction. Other early works on this topic are \citet{sutton00policy} and \citet{baxter01infinitehorizon}. Policy gradients tend to have a high variance and reducing it is an important research area \cite{greensmith04variance,munos06geometric,zhao11analysis,dick15policy,liu18actiondependent}. Our baselines differ from those in RL, in particular because our number of states $H_t$ is not small. The baseline $b^\textsc{opt}$ uses the fact that we have a bandit problem, and thus the best arm in hindsight. Both $b^\textsc{opt}$ and $b^\textsc{self}$ use the fact that we have access to all rewards, even of arms not pulled by policy $\pi$.

Our approach is an instance of meta-learning
\cite{thrun96explanationbased,thrun98lifelong}, where we learn from a sample of tasks to perform well on tasks drawn from the same distribution \cite{baxter98theoretical,baxter00model}. Meta-learning has been applied successfully in deep reinforcement learning (RL) \cite{finn17modelagnostic,finn18probabilistic,mishra18simple}. Sequential multitask learning \cite{caruana97multitask} was studied in multi-armed bandits by \citet{azar13sequential} and in contextual bandits by \citet{deshmukh17multitask}. In comparison, our setting is offline. A general template for sequential meta-learning was presented in \citet{ortega19metalearning}. This work is conceptual and does not study policy gradients.

\citet{maillard11thesis} proposed \softelim with $\theta = 1$ and bounded the number of pulls of a suboptimal arm in Theorem 1.10. The bound has a large $O(K \Delta^{-4})$ constant, which does not seem easy to eliminate. We introduce $\theta$ and have a tighter analysis (\cref{thm:regret bound}) with a $O(1)$ constant, for $\theta = 8$. Also note that \softelim is not very competitive with \ts without tuning. Therefore, this approach have not received much attention in the past, and this is the first work that makes it practical. The design of \softelim resembles Boltzmann exploration \cite{sutton98reinforcement,cesabianchi17boltzmann} and \expthree (\cref{sec:exp3}). The key difference is in how $S_{i, t}$ is chosen. In \expthree and Boltzmann exploration, $S_{i, t}$ only depends on the history of arm $i$. In \softelim, $S_{i, t}$ depends on all arms and makes \softelim sufficiently optimistic.


\section{Conclusions}
\label{sec:conclusions}

We take first steps towards understanding policy-gradient optimization of bandit policies. Our work addresses two main challenges of this problem. First, we derive the reward gradient of optimized policies and show how to estimate it efficiently from a sample. Second, we propose differentiable bandit policies that can outperform state-of-the-art baselines after optimization. Our experiments highlight the simplicity and generality of our approach. We also show that neural network policies can learn interesting biases.

We leave open several questions of interest. First, the design of \softelim can be generalized to structured problems, which we plan to pursue next. The key insight that permits generalization is that $S_{i, t} \propto (\max_{j \in [K]} \hat{\mu}_{j, t - 1} - \hat{\mu}_{i, t - 1})^2 T_{i, t - 1}$ is a ratio of two terms, the squared empirical suboptimality gap of arm $i$ and the variance of the mean reward estimate of arm $i$, which is $1 / T_{i, t - 1}$ in this case. Such quantities can be computed in linear models, for instance. Second, we find that the variance of empirical reward gradients can be high, especially in RNN policies. So any progress in variance reduction would be of a major importance. Finally, except for \cref{thm:concavity}, we are unaware of other algorithm-bandit instance pairs where the Bayes reward is concave in optimized parameters, and thus gradient ascent leads to optimal solutions. Our empirical results (\cref{fig:gradients}a) suggest that such pairs may be common. Convergence guarantees for optimizing softmax bandit policies may be possible in the future, inspired by recent advances in analyzing policy gradients in RL \cite{agarwal19optimality,mei20global}.

\bibliographystyle{plainnat}
\bibliography{References}

\clearpage
\onecolumn
\appendix


\section{Proof of \cref{thm:concavity}}
\label{sec:concavity proof}

We show that the $n$-round Bayes reward of a randomized explore-then-commit policy in $2$-armed Gaussian bandits is concave in the exploration horizon of the policy.

\concavity*
\begin{proof}
We start with the \emph{explore-then-commit} policy \cite{langford08epochgreedy}, which is parameterized by $\theta \in [\floors{n / 2}]$ and works as follows. In the first $2 \theta$ rounds, it explores and pulls each arm $\theta$ times. Let $\hat{\mu}_{i, \theta}$ be the average reward of arm $i$ after $\theta$ pulls. Then, if $\hat{\mu}_{1, \theta} > \hat{\mu}_{2, \theta}$, arm $1$ is pulled for the remaining $n - 2 \theta$ rounds. Otherwise arm $2$ is pulled.

Fix any problem instance $P \sim \cP$. Without loss of generality, let arm $1$ be optimal, that is $\mu_1 > \mu_2$. Let $\Delta = \mu_1 - \mu_2$. The key observation is that the expected $n$-round reward in problem instance $P$ has a closed form
\begin{align}
  r(n, P; \theta)
  & = \mu_1 n - \Delta \left[\theta +
  \prob{\hat{\mu}_{1, \theta} < \hat{\mu}_{2, \theta}} (n - 2 \theta)\right]\,,
  \label{eq:concave reward}
\end{align}
where
\begin{align}
  \prob{\hat{\mu}_{1, \theta} < \hat{\mu}_{2, \theta}}
  & = \prob{\hat{\mu}_{1, \theta} - \hat{\mu}_{2, \theta} < 0}
  = \prob{\hat{\mu}_{1, \theta} - \hat{\mu}_{2, \theta} - \Delta < - \Delta}
  \nonumber \\
  & = \Phi\left(- \Delta \sqrt{\theta / 2}\right)
  = \frac{1}{\sqrt{2 \pi}}
  \int_{x = - \infty}^{- \Delta \sqrt{\theta / 2}} e^{- \frac{x^2}{2}} \dif x
  \label{eq:bad choice}
\end{align}
is the probability of committing to a suboptimal arm after the exploration phase. The third equality is from the fact that $\hat{\mu}_{1, \theta} - \hat{\mu}_{2, \theta} - \Delta \sim \cN(0, 2 / \theta)$, where $\Phi(x)$ is the cumulative distribution function of the standard normal distribution.

Our goal is to prove that $r(n, P; \theta)$ is concave in $\theta$. We rely on the following property of convex functions of a single parameter $x$. Let $f(x)$ and $g(x)$ be non-negative, decreasing, and convex in $x$. Then $f(x) g(x)$ is non-negative, decreasing, and convex in $x$. This follows from
\begin{align*}
  (f(x) g(x))'
  & = f'(x) g(x) + f(x) g'(x)\,, \\
  (f(x) g(x))''
  & = f''(x) g(x) + 2 f'(x) g'(x) + f(x) g''(x)\,.
\end{align*}
It is easy to see that \eqref{eq:bad choice} is non-negative, decreasing, and convex in $\theta$. The same is true for $n - 2 \theta$, under our assumption that $\theta \in [\floors{n / 2}]$. As a result, $\prob{\hat{\mu}_{1, \theta} < \hat{\mu}_{2, \theta}} (n - 2 \theta)$ is convex in $\theta$, and so is $\Delta [\theta + \prob{\hat{\mu}_{1, \theta} < \hat{\mu}_{2, \theta}} (n - 2 \theta)]$. Therefore, $\eqref{eq:concave reward}$ is concave in $\theta$. Finally, the Bayes reward is concave in $\theta$ because $r(n; \theta) = \E{r(n,  P; \theta)}$.

The last remaining issue is that parameter $\theta$ in the explore-then-commit policy cannot be optimized by \gradband, as it is discrete. To allow for optimization, we extend the explore-then-commit policy to continuous $\theta$ by randomized rounding.

The \emph{randomized explore-then-commit} policy is parameterized by continuous $\theta \in [1, \floors{n / 2}]$. The discrete $\bar{\theta}$ is chosen as $\bar{\theta} = \floors{\theta} + Z$, where $Z \sim \mathrm{Ber}(\theta - \floors{\theta})$. Then we execute the original policy with $\bar{\theta}$. The key property of the randomized policy is that its $n$-round Bayes reward is a piecewise linear interpolation of that of the original policy,
\begin{align*}
  (\ceils{\theta} - \theta) \, r(n; \floors{\theta}) +
  (\theta - \floors{\theta}) \, r(n; \ceils{\theta})\,.
\end{align*}
By definition, the above function is continuous and concave in $\theta$. Therefore, \gradband has the same guarantees for maximizing it as stochastic gradient descent on convex functions.
\end{proof}

\clearpage


\section{Proof of \cref{thm:gradient}}
\label{sec:gradient proof}

We derive the gradient of the $n$-round Bayes reward below.

\gradient*
\begin{proof}
The proof has two parts. First, we show that
\begin{align}
  \nabla_\theta r(n; \theta)
  = \sum_{t = 1}^n \E{\nabla_\theta \log p_\theta(I_t \mid H_{t - 1})
  \sum_{s = t}^n Y_{I_s, s}}\,.
  \label{eq:gradient}
\end{align}
The $n$-round Bayes reward can be expressed as $r(n; \theta) = \E{\condE{\sum_{t = 1}^n Y_{I_t, t}}{Y}}$, where the outer expectation is over instances $P$ and their reward realizations $Y$, both of which are independent of $\theta$. Therefore,
\begin{align*}
  \nabla_\theta r(n; \theta)
  = \E{\sum_{t = 1}^n \nabla_\theta \condE{Y_{I_t, t}}{Y}}\,.
\end{align*}
In the inner expectation, the only randomness is due to the pulled arms. Therefore, for any $t \in [n]$, we have
\begin{align*}
  \condE{Y_{I_t, t}}{Y}
  = \sum_{i_{1 : t}} \condprob{I_{1 : t} = i_{1 : t}}{Y} Y_{i_t, t}\,.
\end{align*}
The key to our derivations is that the joint probability distribution over pulled arms in the first $t$ rounds, conditioned on $Y$, decomposes as
\begin{align}
  \condprob{I_{1 : t} = i_{1 : t}}{Y}
  = \prod_{s = 1}^t \condprob{I_s = i_s}{I_{1 : s - 1} = i_{1 : s - 1}, Y}\,,
  \label{eq:chain rule}
\end{align}
by the chain rule of probabilities. Since the policy does not act based on future rewards, we have for any $s \in [n]$ that
\begin{align}
  \condprob{I_s = i_s}{I_{1 : s - 1} = i_{1 : s - 1}, Y}
  = p_\theta(i_s \mid i_{1 : s - 1}, Y_{i_1, 1}, \dots, Y_{i_{s - 1}, s - 1})\,.
  \label{eq:policy equivalence}
\end{align}
Finally, we use that $\nabla_\theta f(\theta) = f(\theta) \nabla_\theta \log f(\theta)$ holds for any non-negative differentiable $f$. This identity is known as the \emph{score-function identity} \cite{aleksandrov68stochastic} and is the basis of all policy-gradient methods. We apply it to $\condE{Y_{I_t, t}}{Y}$ and obtain
\begin{align*}
  \nabla_\theta \condE{Y_{I_t, t}}{Y}
  & = \sum_{i_{1 : t}}
  Y_{i_t, t} \nabla_\theta \condprob{I_{1 : t} = i_{1 : t}}{Y} \\
  & = \sum_{i_{1 : t}} Y_{i_t, t} \, \condprob{I_{1 : t} = i_{1 : t}}{Y}
  \nabla_\theta \log \condprob{I_{1 : t} = i_{1 : t}}{Y} \\
  & = \sum_{s = 1}^t \condE{Y_{I_t, t}
  \nabla_\theta \log p_\theta(I_s \mid H_{s - 1})}{Y}\,,
\end{align*}
where the last equality follows from \eqref{eq:chain rule} and \eqref{eq:policy equivalence}. Now we chain all equalities to obtain the \emph{reward gradient}
\begin{align*}
  \nabla_\theta r(n; \theta)
  = \sum_{t = 1}^n \sum_{s = 1}^t \E{Y_{I_t, t}
  \nabla_\theta \log p_\theta(I_s \mid H_{s - 1})}
  = \sum_{t = 1}^n \E{\nabla_\theta \log p_\theta(I_t \mid H_{t - 1})
  \sum_{s = t}^n Y_{I_s, s}}\,.
\end{align*}
This concludes the first part of the proof.

Now we argue that $b_t$ does not change anything. Since $b_t$ depends only on $I_{1 : t - 1}$ and $Y$,
\begin{align*}
  \E{b_t(I_{1 : t - 1}, Y) \nabla_\theta \log p_\theta(I_t \mid H_{t - 1})}
  = \E{b_t(I_{1 : t - 1}, Y)
  \condE{\nabla_\theta \log p_\theta(I_t \mid H_{t - 1})}{I_{1 : t - 1}, Y}}\,.
\end{align*}
Now note that
\begin{align*}
  \condE{\nabla_\theta \log p_\theta(I_t \mid H_{t - 1})}{I_{1 : t - 1}, Y}
  & = \sum_{i = 1}^K \condprob{I_t = i}{I_{1 : t - 1}, Y}
  \nabla_\theta \log p_\theta(i \mid H_{t - 1}) \\
  & = \sum_{i = 1}^K p_\theta(i \mid H_{t - 1})
  \nabla_\theta \log p_\theta(i \mid H_{t - 1}) \\
  & = \nabla_\theta \sum_{i = 1}^K p_\theta(i \mid H_{t - 1})
  = 0\,.
\end{align*}
The last equality follows from $\sum_{i = 1}^K p_\theta(i \mid H_{t - 1}) = 1$, which is a constant independent of $\theta$. This concludes the proof.
\end{proof}

\clearpage


\section{Analysis of \softelim}
\label{sec:analysis}

First, we informally justify \softelim in \cref{sec:informal analysis}. The regret bound is stated and proved in \cref{sec:regret bound}.

\subsection{Informal Analysis}
\label{sec:informal analysis}

Fix any $2$-armed bandit where arm $1$ is optimal, that is $\mu_1 > \mu_2$. Let $\Delta = \mu_1 - \mu_2$. Fix any round $t$ by which arm $2$ has been pulled \say{often}, so that we get $T_{2, t - 1} = \Omega(\Delta^{-2} \log n)$ and $\hat{\mu}_{2, t - 1} \leq \mu_2 + \Delta / 3$ with high probability. Let
\begin{align*}
  \hat{\mu}_{\max, t}
  = \max \set{\hat{\mu}_{1, t}, \hat{\mu}_{2, t}}\,.
\end{align*}
Now consider two cases. First, when $\hat{\mu}_{\max, t - 1} = \hat{\mu}_{1, t - 1}$, by definition of $p_{1, t}$, arm $1$ is pulled with probability of at least $0.5$. Second, when $\hat{\mu}_{\max, t - 1} = \hat{\mu}_{2, t - 1}$, we have
\begin{align*}
  p_{1, t}
  = \exp[-2 (\hat{\mu}_{2, t - 1} - \hat{\mu}_{1, t - 1})^2 T_{1, t - 1}] p_{2, t}
  \geq \exp[-2 (\mu_1 - \hat{\mu}_{1, t - 1})^2 T_{1, t - 1}] p_{2, t}\,,
\end{align*}
where the last inequality holds with high probability, and follows from $\hat{\mu}_{1, t - 1} \leq \hat{\mu}_{2, t - 1} \leq \mu_2 + \Delta / 3 \leq \mu_1$. Thus, arm $1$ is pulled \say{sufficiently often} relative to arm $2$, proportionally to the deviation of $\hat{\mu}_{1, t - 1}$ from $\mu_1$.

As a consequence, \softelim eventually enters a regime in which arm $1$ has been pulled \say{often}, so that $T_{1, t - 1} = \Omega(\Delta^{-2} \log n)$ and $\hat{\mu}_{1, t - 1} \geq \mu_1 - \Delta / 3$ with high probability. Then $S_{1, t} = 0$ and $S_{2, t} = \Omega(\log n)$ hold with high probability, and arm $2$ is unlikely to be pulled.

\subsection{Regret Bound}
\label{sec:regret bound}

We bound the $n$-round regret of \softelim below.

\regretbound*
\begin{proof}
Each arm is initially pulled once. Therefore,
\begin{align*}
  R(n, P; \theta)
  = \sum_{i = 1}^K \Delta_i \left(\sum_{t = K + 1}^n \prob{I_t = i} + 1\right)\,.
\end{align*}
Now we decompose the probability of pulling each arm $i$ as
\begin{align*}
  \sum_{t = K + 1}^n \prob{I_t = i}
  = & \sum_{t = K + 1}^n \prob{I_t = i, T_{i, t - 1} \leq m} + {} \\
  & \sum_{t = K + 1}^n \prob{I_t = i, T_{i, t - 1} > m, T_{1, t - 1} \leq m} + {} \\
  & \sum_{t = K + 1}^n \prob{I_t = i, T_{i, t - 1} > m, T_{1, t - 1} > m}\,,
\end{align*}
where $m$ is chosen later. In the rest of the proof, we bound each above term separately. To simplify notation, use $\gamma = 1 / \theta$ in instead of $\theta$.

\subsection{Upper Bound on Term $1$}
\label{sec:softelim term 1}

Fix suboptimal arm $i$. Since $T_{i, t} = T_{i, t - 1} + 1$ on event $I_t = i$ and arm $i$ is initially pulled once, we have
\begin{align}
  \sum_{t = K + 1}^n \prob{I_t = i, T_{i, t - 1} \leq m}
  \leq m - 1\,.
  \label{eq:term 1}
\end{align}

\subsection{Upper Bound on Term $3$}
\label{sec:softelim term 3}

Fix suboptimal arm $i$ and round $t$. Let
\begin{align*}
  E_{1, t}
  = \set{\hat{\mu}_{1, t - 1} > \mu_1 - \frac{\Delta_i}{4}}\,, \quad
  E_{i, t}
  = \set{\hat{\mu}_{i, t - 1} < \mu_i + \frac{\Delta_i}{4}}\,,
\end{align*}
be the events that empirical means of arms $1$ and $i$, respectively, are \say{close} to their means. Then
\begin{align*}
  & \prob{I_t = i, T_{i, t - 1} > m, T_{1, t - 1} > m} \\
  & \quad \leq \prob{I_t = i, T_{i, t - 1} > m, E_{1, t}} +
  \prob{\bar{E}_{1, t}, T_{1, t - 1} > m} \\
  & \quad \leq \prob{I_t = i, T_{i, t - 1} > m, E_{1, t}, E_{i, t}} +
  \prob{\bar{E}_{1, t}, T_{1, t - 1} > m} +
  \prob{\bar{E}_{i, t}, T_{i, t - 1} > m}\,.
\end{align*}
Let $m = \ceils{16 \Delta_i^{-2} \log n}$. By the union bound and Hoeffding's inequality, we get
\begin{align*}
  \prob{\bar{E}_{1, t}, T_{1, t - 1} > m}
  & \leq \sum_{s = m + 1}^n
  \prob{\mu_1 - \hat{\mu}_{1, t - 1} \geq \frac{\Delta_i}{4}, \, T_{1, t - 1} = s}
  < n \exp\left[-2 \frac{\Delta_i^2}{16} m\right]
  = n^{-1}\,, \\
  \prob{\bar{E}_{i, t}, T_{i, t - 1} > m}
  & \leq \sum_{s = m + 1}^n
  \prob{\hat{\mu}_{i, t - 1} - \mu_i \geq \frac{\Delta_i}{4}, \, T_{i, t - 1} = s}
  < n \exp\left[-2 \frac{\Delta_i^2}{16} m\right]
  = n^{-1}\,.
\end{align*}
It follows that
\begin{align*}
  \prob{I_t = i, T_{i, t - 1} > m, T_{1, t - 1} > m}
  \leq \prob{I_t = i, T_{i, t - 1} > m, E_{1, t}, E_{i, t}} + 2 n^{-1}\,.
\end{align*}
Now note that $\hat{\mu}_{1, t - 1} - \hat{\mu}_{i, t - 1} \geq \Delta_i / 2$ on events $E_{1, t}$ and $E_{i, t}$. Let
\begin{align}
  \hat{\mu}_{\max, t - 1}
  = \max_{i \in [K]} \hat{\mu}_{i, t - 1}
  \label{eq:highest empirical mean}
\end{align}
be the highest empirical mean in round $t$. Since $\hat{\mu}_{\max, t - 1} \geq \hat{\mu}_{1, t - 1}$, we have $\hat{\mu}_{\max, t - 1} - \hat{\mu}_{i, t - 1} \geq \Delta_i / 2$. Therefore, on event $T_{i, t - 1} > m$, we get
\begin{align}
  p_{i, t}
  \leq \exp[-2 \gamma (\hat{\mu}_{\max, t - 1} - \hat{\mu}_{i, t - 1})^2 T_{i, t - 1}]
  \leq \exp\left[-2 \gamma \frac{\Delta_i^2}{4} m\right]
  \leq n^{-8 \gamma}\,.
  \label{eq:unlikely pull}
\end{align}
Finally, we chain all inequalities over all rounds and get that term $3$ is bounded as
\begin{align}
  \sum_{t = K + 1}^n \prob{I_t = i, T_{i, t - 1} > m, T_{1, t - 1} > m}
  \leq n^{1 - 8 \gamma} + 2\,.
  \label{eq:term 3}
\end{align}

\subsection{Upper Bound on Term $2$}
\label{sec:softelim term 2}

Fix suboptimal arm $i$ and round $t$. First, we apply Hoeffding's inequality to arm $i$, as in \cref{sec:softelim term 3}, and get
\begin{align*}
  \prob{I_t = i, T_{i, t - 1} > m, T_{1, t - 1} \leq m}
  & \leq \prob{I_t = i, T_{i, t - 1} > m, T_{1, t - 1} \leq m, E_{i, t}} + n^{-1} \\
  & = \E{p_{i, t} \I{T_{i, t - 1} > m, T_{1, t - 1} \leq m, E_{i, t}}} + n^{-1}\,.
\end{align*}
Let $\hat{\mu}_{\max, t - 1}$ be defined as in \eqref{eq:highest empirical mean}. Now we bound $p_{i, t}$ from above using $p_{1, t}$. We consider two cases. First, suppose that $\hat{\mu}_{\max, t - 1} > \mu_1 - \Delta_i / 4$. Then we have \eqref{eq:unlikely pull}. On the other hand, when $\hat{\mu}_{\max, t - 1} \leq \mu_1 - \Delta_i / 4$, we have
\begin{align}
  p_{i, t}
  = \frac{\exp[-2 \gamma (\hat{\mu}_{\max, t - 1} - \hat{\mu}_{i, t - 1})^2 T_{i, t - 1}]}
  {\exp[-2 \gamma (\hat{\mu}_{\max, t - 1} - \hat{\mu}_{1, t - 1})^2 T_{1, t - 1}]}
  p_{1, t}
  \leq \exp[2 \gamma (\mu_1 - \hat{\mu}_{1, t - 1})^2 T_{1, t - 1}]
  p_{1, t}\,.
  \label{eq:pi to p1}
\end{align}
It follows that
\begin{align*}
  p_{i, t}
  \leq \exp[2 \gamma (\mu_1 - \hat{\mu}_{1, t - 1})^2 T_{1, t - 1}] p_{1, t} +
  n^{-8 \gamma}\,,
\end{align*}
and we further get that
\begin{align*}
  & \E{p_{i, t} \I{T_{i, t - 1} > m, T_{1, t - 1} \leq m, E_{i, t}}} \\
  & \quad \leq \E{\exp[2 \gamma (\mu_1 - \hat{\mu}_{1, t - 1})^2
  T_{1, t - 1}] p_{1, t} \I{T_{1, t - 1} \leq m}} + n^{-8 \gamma} \\
  & \quad = \E{\exp[2 \gamma (\mu_1 - \hat{\mu}_{1, t - 1})^2
  T_{1, t - 1}] \I{I_t = 1, T_{1, t - 1} \leq m}} + n^{-8 \gamma}\,.
\end{align*}
With a slight abuse of notation, let $\hat{\mu}_{1, s}$ denote the average reward of arm $1$ after $s$ pulls. Then, since $T_{1, t} = T_{1, t - 1} + 1$ on event $I_t = 1$, we have
\begin{align*}
  \sum_{t = K + 1}^n \E{\exp[2 \gamma (\mu_1 - \hat{\mu}_{1, t - 1})^2
  T_{1, t - 1}] \I{I_t = 1, T_{1, t - 1} \leq m}}
  \leq \sum_{s = 1}^m \E{\exp[2 \gamma (\mu_1 - \hat{\mu}_{1, s})^2 s]}\,.
\end{align*}
Now fix the number of pulls $s$ and note that
\begin{align*}
  \E{\exp[2 \gamma (\mu_1 - \hat{\mu}_{1, s})^2 s]}
  & \leq \sum_{\ell = 0}^\infty
  \prob{\frac{\ell + 1}{\sqrt{s}} > \abs{\mu_1 - \hat{\mu}_{1, s}} \geq
  \frac{\ell}{\sqrt{s}}} \exp[2 \gamma (\ell + 1)^2] \\
  & \leq \sum_{\ell = 0}^\infty
  \prob{\abs{\mu_1 - \hat{\mu}_{1, s}} \geq \frac{\ell}{\sqrt{s}}}
  \exp[2 \gamma (\ell + 1)^2] \\
  & \leq 2 \sum_{\ell = 0}^\infty \exp[2 \gamma (\ell + 1)^2 - 2 \ell^2]\,,
\end{align*}
where the last step is by Hoeffding's inequality. The above sum can be easily bounded for any $\gamma < 1$. In particular, for $\gamma = 1 / 8$, the bound is
\begin{align*}
  \sum_{\ell = 0}^\infty \exp\left[\frac{(\ell + 1)^2}{4} - 2 \ell^2\right]
  \leq e^{\frac{1}{4}} + \sum_{\ell = 1}^\infty 2^{- \ell}
  \leq e\,.
\end{align*}
Now we combine all above inequalities and get that term $2$ is bounded as
\begin{align}
  \sum_{t = K + 1}^n \prob{I_t = i, T_{i, t - 1} > m, T_{1, t - 1} \leq m}
  \leq 2 e m + n^{1 - 8 \gamma} + 1\,.
  \label{eq:term 2}
\end{align}
Finally, we chain \eqref{eq:term 1}, \eqref{eq:term 3}, and \eqref{eq:term 2}; and use that $m \leq 16 \Delta_i^{-2} \log n + 1$.
\end{proof}

\clearpage


\section{Technical Lemmas}
\label{sec:lemmas}

\expthreederivative*
\begin{proof}
First, we express the derivative of $\log p_{i, t}$ with respect to $\theta$ as
\begin{align*}
  \nabla_\theta \log p_{i, t}
  = \frac{1}{p_{i, t}} \nabla_\theta p_{i, t}
  = \frac{1}{p_{i, t}}
  \left[(1 - \theta) \nabla_\theta \frac{V_{i, t}}{V_t} -
  \frac{V_{i, t}}{V_t} + \frac{1}{K}\right]\,.
\end{align*}
Now note that
\begin{align*}
  \nabla_\theta \frac{V_{i, t}}{V_t}
  = \frac{1}{V_t} \nabla_\theta V_{i, t} + V_{i, t} \nabla_\theta \frac{1}{V_t}
  = \frac{V_{i, t} S_{i, t}}{V_t K} -
  \frac{V_{i, t}}{V_t^2} \sum_{j = 1}^K V_{j, t} \frac{S_{j, t}}{K}
  = \frac{V_{i, t}}{V_t} \left[\frac{S_{i, t}}{K} -
  \sum_{j = 1}^K \frac{V_{j, t}}{V_t} \frac{S_{j, t}}{K}\right]\,.
\end{align*}
This concludes the proof.
\end{proof}

\clearpage


\section{Supplementary Experiments}
\label{sec:supplementary experiments}

The Bayes regret of baseline bandit algorithms in \cref{fig:policy optimization,fig:rnn} is reported in \cref{tab:baselines}.

\begin{table*}[t]
  \centering
  {\scriptsize
  \begin{tabular}{l|rrrrrr} \hline
    Figure & 2a-b & 2c (Bernoulli) & 2c (beta) & 3a & 3b \\ \hline
    Gittins index & $3.89 \pm 0.07$ & x & x &
    $3.89 \pm 0.07$ & $2.26 \pm 0.04$ \\
    \ts & $5.47 \pm 0.05$ & $28.06 \pm 0.45$ & $28.06 \pm 0.45$ &
    $5.47 \pm 0.05$ & $3.50 \pm 0.03$ \\
    \ucb & $9.95 \pm 0.03$ & $129.09 \pm 0.60$ & $129.09 \pm 0.60$ &
    $9.95 \pm 0.03$ & $8.52 \pm 0.03$ \\
    \ucbv & $15.79 \pm 0.03$ & $289.82 \pm 1.90$ & $276.07 \pm 1.65$ &
    $15.79 \pm 0.03$ & $19.03 \pm 0.10$ \\ \hline
  \end{tabular}
  }
  \caption{The Bayes regret of baseline bandit algorithms in \cref{fig:policy optimization,fig:rnn}. The crosses mark computationally-prohibitive experiments.}
  \label{tab:baselines}
\end{table*}

\clearpage


\section{RNN Implementation}
\label{sec:rnn implementation}

We carry out the RNN experiments using PyTorch framework.
In this paper, we restrict ourselves to binary 0/1 rewards.
For all experiments, our policy network is a single layer LSTM followed by LeakyRELU non-linearity and a fully connected layer. 
We use the fixed LSTM latent state dimension of 50, irrespective of numbers of arms.
The implementation of the policy network is provided in the code snippet below:

\begin{lstlisting}[language=Python, caption=Policy Network]
class RecurrentPolicyNet(nn.Module):
  def __init__(self, K=2, d=50):
    super(RecurrentPolicyNet, self).__init__()
    self.action_size = K  # Number of arms
    self.hidden_size = d
    self.input_size = 2*d

    self.arm_emb = nn.Embedding(K, self.hidden_size)     # Number of arms
    self.reward_emb = nn.Embedding(2, self.hidden_size)  # For 0 reward or 1 reward
    self.rnn = nn.LSTMCell(input_size=self.input_size,
                          hidden_size=self.hidden_size)
    self.relu = nn.LeakyReLU()
    self.linear = nn.Linear(self.hidden_size, self.action_size)

    self.hprev = None

  def reset(self):
    self.hprev = None

  def forward(self, action, reward):
    arm = self.arm_emb(action)
    rew = self.rew_emb(reward)

    inp = torch.cat((arm, rew), 1)
    h = self.rnn(inp, self.hprev)
    self.hprev = h

    h = self.relu(h[0])
    y = self.linear(h)

    return y
\end{lstlisting}

To train the policy we use the proposed \gradband algorithm as presented in Alg.~\ref{alg:gradband}. We used a batch-size $m=500$ for all experiments.
Along with theoretically motivated steps, we had to apply a few practical tricks:
\begin{itemize}
    \item Instead of SGD, we used adaptive optimizers like Adam or Yogi \citep{zaheer18adaptive}.
    \item We used an exponential decaying learning rate schedule. We start with a learning rate of 0.001 and decay every step by a factor of 0.999.
    \item We used annealing over the probability to play an arm. This encourages exploration in early phase of training. In particular we used temperature = $1/(1-\exp(-5i/L))$, where $i$ is current training iteration and $L$ is the total number of training iterations.
    \item We applied curriculum learning as described in Section~\ref{sec:rnn experiment}.
\end{itemize}
Our training procedure is highlighted in the code snippet below.

\begin{lstlisting}[language=Python, caption=Training overview]
optimizer = torch.optim.Adam(policy.parameters(), lr=0.001)
scheduler = torch.optim.lr_scheduler.ExponentialLR(optimizer, 0.999)

...

probs = rnn_policy_network(previous_action, previous_reward)  
m = Categorical(probs/temperature)   # probability over K arms with temperature
action = m.sample()                  # select one arm
reward = bandit.play(action)         # receive reward

...

loss = -m.log_prob(action) * (cummulative_reward - baseline)  # Eq (3)
loss.backward()   # Eq (9)
optimizer.step()
scheduler.step()

...
\end{lstlisting}

\end{document}